\documentclass{article}
\usepackage{spconf,amsmath,graphicx,hyperref}

\usepackage{wasysym}
\usepackage[utf8]{inputenc} % allow utf-8 input
\usepackage[T1]{fontenc}    % use 8-bit T1 fonts
\usepackage{hyperref}  % hyperlinks
\usepackage{algorithm}
\usepackage[noend]{algpseudocode}
\usepackage{amsthm}
\usepackage{url}            % simple URL typesetting
\usepackage{booktabs}       % professional-quality tables
\usepackage{amsmath,amssymb}%
\usepackage{amsfonts}       % blackboard math symbols
\usepackage{nicefrac}       % compact symbols for 1/2, etc.
\usepackage{microtype}      % microtypography
\usepackage{xcolor}         % colors

\newtheorem{Proposition}{Proposition}

\usepackage{subfig}
\usepackage{tabularx}
\usepackage{forest}
\usepackage{fancyhdr}
\usetikzlibrary{arrows.meta}
\usepackage{todonotes}
\usepackage{arydshln}
\usepackage{tikz}
\usepackage{amsmath,amsfonts,bm}

\def\eqref#1{equation~\ref{#1}}
\def\Eqref#1{Equation~\ref{#1}}
\def\1{\bm{1}}

\def\ve{{\bm{e}}}

\def\vx{{\bm{x}}}

\def\mA{{\bm{A}}}

\def\mD{{\bm{D}}}

\def\mF{{\bm{F}}}

\def\mI{{\bm{I}}}

\def\mP{{\bm{P}}}

\def\mW{{\bm{W}}}

\DeclareMathAlphabet{\mathsfit}{\encodingdefault}{\sfdefault}{m}{sl}
\SetMathAlphabet{\mathsfit}{bold}{\encodingdefault}{\sfdefault}{bx}{n}

\title{DAG Structure Learning with Node-Varying Heavy-Tailed Disturbances}
\name{Robin Francis and Sundeep Prabhakar Chepuri
}
\address{Indian Institute of Science, Bangalore, India}
\begin{document}
\ninept
\maketitle
\begin{abstract}
DAG structures can be learned from observations by incorporating acyclicity via smooth constraints, such as DAGMA's log-determinant function. Typically, observations and the underlying DAG are related through linear structural equation models, with Gaussianity assumption on the disturbances at each node. Extending DAG learning methods that account for scale and tail behavior of disturbances that vary across nodes remains a challenge. We propose a node-varying Student-$t$ DAGMA that jointly estimates the graph structure, node-specific disturbance scales, and the degrees-of-freedom parameter. A Gaussian scale-mixture representation yields an iteratively reweighted formulation that down-weights large standardized residuals. Building on existing results, we also establish identifiability of the graph and disturbance parameters. Synthetic experiments demonstrate improved structural recovery under heterogeneous heavy-tailed disturbances, and experiments on real finance data show that the proposed method yields lower test negative log-likelihood than the evaluated baselines.
\end{abstract}
\begin{keywords}
Directed acyclic graphs, expectation-maximization, heavy-tailed noise, Student-$t$ distribution.
\end{keywords}
\section{Introduction}
Structural equation models (SEMs) describe relationships among variables and are widely used in causal inference, including applications in economics and genetics~\cite{pearl2009causality_SEM}. We consider linear SEMs associated with directed acyclic graphs (DAGs), in which directed edges represent direct causal effects under the model assumptions. Learning such graphs from data is challenging because acyclicity imposes a combinatorial constraint~\cite{chickering1996learning,Chickering_2004}. A further challenge arises when disturbances differ across nodes in both scale and tail behavior. For example, in financial networks, some assets may experience frequent extreme shocks, whereas others exhibit relatively stable fluctuations, as illustrated in Fig.~\ref{fig:illustration}. Accounting for heterogeneity requires learning the magnitude of disturbances from their likelihood that produce extreme observations, motivating DAG learning methods that adapt to both characteristics at the node level.

NOTEARS~\cite{NEURIPS2018_notears} introduced a differentiable characterization of acyclicity, paving the way for a growing body of work on continuous optimization for DAG structure learning. Later methods include GOLEM~\cite{NEURIPS2020_golem}, which uses a likelihood-based objective, and DAGMA~\cite{NEURIPS2022_dagma}, which combines a log-determinant acyclicity characterization with a path-following optimization strategy. In parallel, other works addresses more general disturbance models. For instance, LiNGAM exploits non-Gaussianity for causal discovery~\cite{JMLRLiNGAM}, while heavy-tailed causal models have been studied using extremal dependence measures~\cite{gnecco2020causaldiscoveryheavytailedmodels}. Student-$t$ distributions have also been used for robust graphical modeling through EM-based penalized likelihood estimation, but primarily for undirected graphs ~\cite{Robust_DAG_finegold}. Within continuous DAG learning, Huber-DAGMA replaces the squared loss with a Huber loss to reduce the influence of large residuals~\cite{PENG2026_Huber_DAGMA}, but does not explicitly estimate node-specific tail parameters. Heteroscedastic approaches allow noise variances to vary across variables or samples~\cite{duong2023heteroscedasticcausalstructurelearning,Yin_Heteroscedastic_2024}. In particular, CoLiDE jointly estimates the DAG structure and node-specific noise scales~\cite{saboksayr2024colide}. Although these methods address robustness or scale heterogeneity, adapting noise scales alone does not capture differences in tail behavior across nodes. {\color{black}Fig. 1(b) demonstrates the benefit of node-specific tail adaptation, with lower negative log-likelihood (NLL) under heterogeneous disturbances.} This motivates a continuous DAG learning framework that explicitly models both forms of disturbance heterogeneity.
\begin{figure}
    \captionsetup{font=scriptsize}
\centering
\subfloat[\label{fig:illus1}]{\includegraphics[width=0.4\columnwidth ]{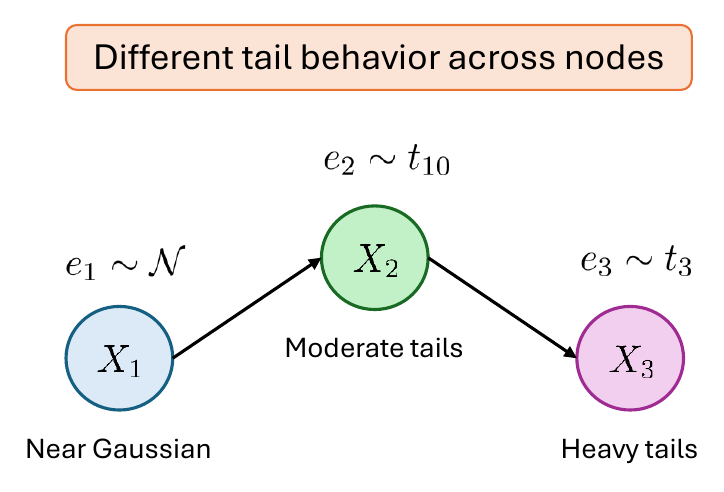}}
~
\subfloat[\label{fig:illus2}]{\includegraphics[width=0.4\columnwidth ]{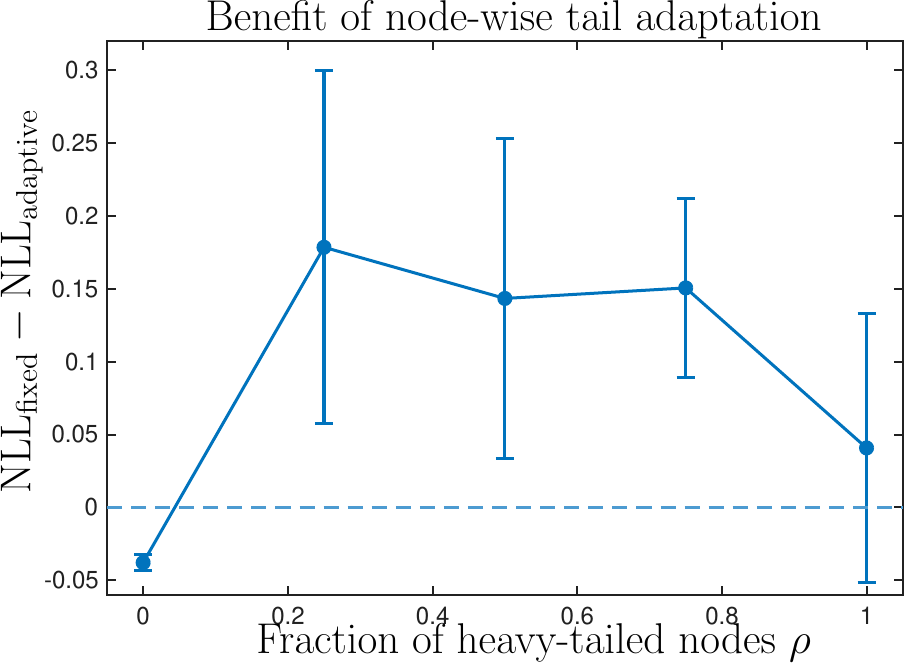}}
\vspace{-2mm}
\caption{ \footnotesize Illustration of node-varying heavy-tailed disturbances and the benefit of node-varying Student-$t$ DAGMA. $(a)$ Different nodes exhibit different disturbance characteristics. $(b)$ Difference in NLL between fixed and adaptive models as the fraction of heavy-tailed nodes varies.}
\vspace{-6mm}
\label{fig:illustration}
\end{figure}
In this work, we propose node-varying Student-$t$ DAGMA for learning linear DAGs under heterogeneous heavy-tailed disturbances. Our main contributions are:
\begin{itemize}
\item \emph{Node-specific scale and tail modeling.} We assign each node its own Student-$t$ scale and degrees-of-freedom (DoF) parameters, capturing differences in both disturbance magnitude and tail heaviness across nodes. Smaller DoF values accommodate heavier tails and downweight extreme residuals more strongly, while larger values approach Gaussian behavior.
\item \emph{Joint structure and disturbance estimation.} We integrate this model into DAGMA's log-determinant acyclicity formulation and central-path optimization strategy to jointly estimate the DAG structure, disturbance scales, and DoF parameters. This extends scale-adaptive continuous DAG learning to account explicitly for node-varying tail behavior.
\color{black}\item \emph{Identifiability analysis.} Building on established independent component analysis (ICA) and LiNGAM results, we establish identifiability of the graph structure and node-specific Student-$t$ parameters under the stated model assumptions.
\end{itemize}
{\color{black}Synthetic and financial-data experiments demonstrate improved DAG recovery and lower held-out NLL, respectively, compared with the evaluated baselines.}

\section{Preliminaries}

\noindent\textbf{Notation.} We use boldface lowercase and uppercase letters to denote vectors and matrices, respectively, and calligraphic letters to denote sets. For vectors, $\|\cdot\|_p$ denotes the $\ell_p$ norm, $p\geq 1$. For matrices, $\|\mW\|_1=\sum_{i,j}|W_{ij}|$ denotes the entrywise $\ell_1$ norm. $\mI$ denotes the $d\times d$ identity matrix, $\rho(\cdot)$ denotes the spectral radius of its matrix argument, and $\circ$ denotes the elementwise Hadamard product. The expectation operator is denoted by $\mathbb{E}$.

We begin by reviewing the DAGMA formulation~\cite{NEURIPS2022_dagma} and its least-squares specialization under Gaussian disturbances.

\noindent\textbf{DAGs and Linear SEMs.}
A DAG $\mathcal{G}=(\mathcal{V},\mathcal{E})$ consists of a node set $\mathcal{V}=\{1,\ldots,d\}$ and a directed edge set $\mathcal{E}\subseteq\mathcal{V}\times\mathcal{V}$, with no directed cycles. An edge $(i,j)\in\mathcal{E}$, denoted by $i\to j$, identifies $i$ as a parent of $j$ and $j$ as a child of $i$. The graph is represented by a weighted adjacency matrix $\mW\in\mathbb{R}^{d\times d}$, where $W_{ij}\neq 0$ if and only if $i\to j$, and $W_{jj}=0$. Every DAG admits a topological ordering in which all edges point from earlier to later nodes. Under this ordering, $\mW$ is strictly upper triangular and hence nilpotent, satisfying $\mW^d=\boldsymbol{0}$. For arbitrary signed weights, nilpotency of $\mW$ alone does not imply acyclicity because contributions from different walks can cancel. An exact characterization instead uses the nonnegative matrix $\mW\circ\mW$, i.e., the graph is acyclic if and only if $(\mW\circ\mW)^d=\boldsymbol{0}$, equivalently $\rho(\mW\circ\mW)=0$~\cite{NEURIPS2022_dagma}.

Consider $N$ independent observations 
$\{\vx_m\}_{m=1}^{N}$ with each observation $\vx_m\in\mathbb{R}^{d}$ generated by the linear SEM
\begin{equation}\label{eq:Linear_SEM}
\vx_m=(\mW)^{\mathsf T}\vx_m+\ve_m.
\end{equation}
We assume that the structural disturbance vectors $\ve_m$ are identically distributed and independent across observations with mutually independent components across nodes.
Specifically, $e_{mj}=x_{mj}-\sum_{i\neq j}W_{ij}x_{mi}$ is the structural disturbance at node $j$.
\\

\noindent\textbf{DAGMA.} For $s>0$, define the domain of feasible matrices, $\mathcal{D}_s= \left\{\mW\in\mathbb{R}^{d\times d}:  \rho(\mW\circ\mW)<s\right\}. $
DAGMA uses the log-determinant acyclicity function~\cite{NEURIPS2022_dagma}:
\begin{equation}\label{eq:dagma_h}
h_s(\mW)=
-\log\det\!\left(s\mI-\mW\circ\mW\right)+d\log s.
\end{equation}
For $\mW\in\mathcal{D}_s$, this function is nonnegative and satisfies
\begin{equation*}
h_s(\mW)=0
\ \Longleftrightarrow
\mW\text{ represents a DAG}
\ \Longleftrightarrow
\nabla h_s(\mW)=0.
\end{equation*}
Thus, $h_s$ provides a smooth penalty that vanishes for acyclic graphs. DAGMA estimates the adjacency matrix $\mW$ through a sequence of optimization problems
\begin{equation}\label{eq:DAGMA_prob}
\text{minimize}
F_{\mu_k}(\mW)
:=
\mu_k\left(L(\mW)+\lambda\|\mW\|_1\right)+h_s(\mW),
\end{equation}
over $\mW\in\mathcal{D}_s, W_{jj}=0,j=1,\ldots,d$. Here,  $L(\mW)$ is the data-fitting loss, $\lambda$ controls sparsity, and $\mu_k$ is gradually decreased over the iterations, increasing the relative weight of the acyclicity-promoting term. For independent Gaussian disturbances with a fixed common variance, i.e., $\ve_m\sim\mathcal{N}(\boldsymbol{0},\sigma^2\mI)$, the NLL on the DAG model is proportional, up to terms independent of $\mW$, to the least-squares penalty, $ L(\mW)=
\frac{1}{N}\sum_{m=1}^{N}\|(\mI - \mW^{\mathsf T})\vx_m\|_2^2.$ 
The quadratic dependence on the residual magnitude in the penalty term results in a large penalty for extreme residuals, motivating an alternative loss function for heavy-tailed disturbances.

\section{Problem Modeling and Formulation}

We consider independent node-wise Student-$t$ disturbances,
$e_{mj} \sim t_{\nu_j}(0,\sigma_j),$ where $\sigma_j>0$ and
$\nu_j>0$ denote the scale and DoF parameters of node $j$, respectively. The Student-$t$ model
accommodates the heavy-tailed disturbances and extreme observations (common in financial data). Different variables may exhibit distinct disturbance
characteristics. For instance, the scale parameter $\sigma_j$ captures
disturbance magnitude, while $\nu_j$ controls tail behavior.
We therefore allow both parameters to vary across nodes
to account for heterogeneity in the underlying disturbance distributions.

Under the assumed disturbance model, the normalized NLL is
\begin{equation}
\begin{aligned}
\label{eq:student_nll}
L(\theta) =&\frac{1}{N}\sum_{m=1}^{N}\sum_{j=1}^{d}
\log\sigma_j+c(\nu_j){+}\frac{\nu_j{+}1}{2} 
\log\left(1+\frac{e_{mj}^2}{\nu_j\sigma_j^2}\right),
\end{aligned}
\end{equation}
where $\theta = \{\mW,\mathbf{\sigma},\mathbf{\nu}\} $ and $c(\nu)=\frac{1}{2}\log(\nu\pi)+\log\Gamma(\nu/2)
-\log\Gamma((\nu+1)/2)$.
{\color{black}
We aim to jointly estimate the weighted adjacency matrix as well as the scale and DoF parameters $\{\mathbf W, \boldsymbol{\sigma},\boldsymbol{\nu}\},$ by minimizing the Student-$t$ NLL subject to graph sparsity and acyclicity.} The joint estimation of the DAG structure and node-specific disturbance parameters is formulated as
\begin{equation}
\label{eq:joint_problem}
\begin{aligned}
(\mathcal{P})\quad
&\underset{\mW,\boldsymbol{\sigma},\boldsymbol{\nu}}{\text{minimize}}
\quad  L(\mW,\boldsymbol{\sigma},\boldsymbol{\nu})+\lambda\|\mW\|_1\\
&\text{s. to}\quad \quad h_s(\mW)=0,\; \rho(\mW\circ\mW)<s,\, \sigma_j,\nu_j>0, 
\end{aligned}
\end{equation}
where $s>0$, $L(\mW,\boldsymbol{\sigma},\boldsymbol{\nu})$ is defined in \Eqref{eq:student_nll}, and $j=1,\ldots,d$. The objective balances data fit and graph sparsity, while the constraints enforce acyclicity and valid disturbance parameters. {\color{black}A direct approach to $(\mathcal{P})$ requires handling a non-quadratic Student-$t$ loss, we bypass this to keep DAGMA's least-squares framework. By writing the Student-$t$ distribution as a Gaussian scale-mixture, in what follows, we can  an expectation-maximization (EM) based iteratively reweighted least-squares (IRLS) algorithm.}

\section{Node-varying Student-$\MakeLowercase{t}$ DAGMA Algorithm}
In this section, we develop our approach to solving $(\mathcal{P})$. Using the Gaussian scale-mixture representation of the Student-$t$ distribution, we derive a weighted least-squares surrogate that allow us to alternately update the graph structure, node specific scales, and DoF. We embed these updates in DAGMA's central-path framework, which gradually enforces acyclicity.

The Student-$t$ distribution admits the Gaussian scale-mixture representation:
$e_{mj}\mid\tau_{mj}
\sim\mathcal{N}\left(0,\frac{\sigma_j^2}{\tau_{mj}}\right),
\quad
\tau_{mj}\sim\operatorname{Gamma}
\left(\frac{\nu_j}{2},\frac{\nu_j}{2}\right)$, where $\tau_{mj}$ is a latent precision variable, and the
Gamma distribution is parameterized by shape and rate.
Marginalizing over $\tau_{mj}$ recovers the Student-$t$ distribution. Our objective is to jointly estimate the weighted adjacency
matrix $\mW$ and the node-specific parameters
$\boldsymbol{\sigma}$ and $\boldsymbol{\nu}$. 

To jointly estimate these parameters within the DAGMA framework of~\Eqref{eq:DAGMA_prob}, using the Student-$t$ NLL in~\Eqref{eq:student_nll}, we leverage the Gaussian scale-mixture representation through an EM procedure. Given the current parameter estimates $\theta$, the posterior distribution of the latent precision is, $
\tau_{mj}\mid e_{mj},\theta
\sim\operatorname{Gamma}\left(
\frac{\nu_j+1}{2},
\frac{\nu_j+e_{mj}^2/\sigma_j^2}{2}
\right).$ 
Consequently, the observation- and node-dependent
reliability weights are
\begin{equation}
\label{eq:reliability_weights}
a_{mj}
=\mathbb{E}[\tau_{mj}\mid e_{mj},\theta]
=\frac{\nu_j+1}{\nu_j+e_{mj}^2/\sigma_j^2}.
\end{equation}
Observations with large standardized residuals receive smaller weights, thereby reducing their influence on the subsequent graph update. {\color{black}The EM formulation replaces the non-quadratic Student-$t$ loss with a quadratic surrogate. Specifically, taking the conditional expectation of the complete-data NLL with respect to the latent precisions yields a weighted least-squares objective, where the weights $a_{mj}^{(\ell)}$ are computed in the E-step and held fixed during the graph update. Consequently, the graph-dependent component of the EM surrogate is given by,} $J(\mW,\mA)=\frac{1}{2N}
\sum_{m=1}^{N}\sum_{j=1}^{d}
\frac{a_{mj}}{\sigma_j^2}e_{mj}^2$. Let $\mA=[a_{mj}]\in\mathbb{R}^{N\times d}$ and define
$\mD_m=\operatorname{diag}\left(
\frac{a_{m1}}{\sigma_1^2},\ldots,
\frac{a_{md}}{\sigma_d^2}\right).$
With the weights and current scale estimates held fixed, the graph-dependent component of the EM surrogate is
\begin{equation}
\label{eq:weighted_loss}
\begin{aligned}
J(\mW,\mA)&=\frac{1}{2N}\sum_{m=1}^{N}\left\|\mD_m^{1/2}(\mI-\mW^{\mathsf T})\vx_m\right\|_2^2.
\end{aligned}
\end{equation}
Each diagonal entry of $\mD_m$ weights the corresponding
node residual according to its reliability and disturbance
scale. The dependence of $J$ on the fixed scale estimates is suppressed for brevity.
\begin{algorithm}[t]
\caption{Node-varying Student-$t$ DAGMA}
\label{alg:tDAGMA}
\begin{algorithmic}[1]
\State Initialize $\mW_0$ with zero diagonal and
$\rho(\mW_0\circ\mW_0)<s$.
\State Initialize $\mu_0,\sigma_{j,0}>0$,
$\nu_{j,0}\in[\nu_{\min},\nu_{\max}]$, and
$j=1,\ldots,d$.
\For{$k=0,1,\ldots,K-1$}
\State Set $\mW^{(0)}\gets\mW_k$,
$\boldsymbol{\sigma}^{(0)}\gets\boldsymbol{\sigma}_k$,
and $\boldsymbol{\nu}^{(0)}\gets\boldsymbol{\nu}_k$.
\For{$\ell = 0,\ldots \text{until convergence}$}
\State Compute
$e_{mj}^{(\ell)}
=x_{mj}-\sum_{i\neq j}W_{ij}^{(\ell)}x_{mi}$,
$\forall m,j$.
\State Evaluate $a_{mj}^{(\ell)}$ and
$b_{mj}^{(\ell)}$ using eq.~\ref{eq:reliability_weights} and~\ref{eq:bj_equation}, respectively.
\State Approximately solve eq.~\ref{eq:W_update}
to obtain $\mW^{(\ell+1)}$.
\State Recompute
$e_{mj}^{(\ell+1)}
=x_{mj}-\sum_{i\neq j}W_{ij}^{(\ell+1)}x_{mi}$,
$\forall m,j$.
\For{$j=1,\ldots,d$}
\State Update $\sigma_j^{(\ell+1)}$
using eq.~\ref{eq:sigma_update}.
% \State Compute $\bar a_j^{(\ell)}$
% and $\bar b_j^{(\ell)}$.
\State Update $\nu_j^{(\ell+1)}$
using eq.~\ref{eq:nu_update} and acceptance criterion.
\EndFor
\EndFor
\State Set
$(\mW_{k+1},\boldsymbol{\sigma}_{k+1},
\boldsymbol{\nu}_{k+1})
\gets
(\mW^{(\ell)},\boldsymbol{\sigma}^{(\ell)},
\boldsymbol{\nu}^{(\ell)})$.
\State Set $\mu_{k+1}\gets\eta\mu_k$,
where $0<\eta<1$.
\EndFor
\State \Return $\widehat{\mW}=\mW_K$,
$\widehat{\boldsymbol{\sigma}}=\boldsymbol{\sigma}_K$,
$\widehat{\boldsymbol{\nu}}=\boldsymbol{\nu}_K$.
\end{algorithmic}
\end{algorithm}
The node-specific scale and DoF parameters are subsequently updated within the EM procedure.

The graph update is obtained by replacing the data-fitting term in the DAGMA objective with \Eqref{eq:weighted_loss}, yielding an IRLS procedure. The node-specific scale and DoF parameters are subsequently updated within the EM procedure.

We solve this problem using a nested optimization procedure with an outer DAGMA central-path iteration, indexed by $k$, and an inner generalized EM iteration, indexed by $\ell$. For each fixed path parameter $\mu_k$, the inner loop alternately updates the graph matrix $\mW$ and the node-specific parameters $\boldsymbol{\sigma}$ and $\boldsymbol{\nu}$. Upon convergence, $\mu_k$ is decreased to progressively enforce acyclicity.
\\

\noindent\textbf{Iteratively Reweighted DAGMA.} At each inner iteration $\ell$, the E-step evaluates the residuals
$e_{mj}^{(\ell)}=x_{mj}-\sum_{i\neq j}W_{ij}^{(\ell)}x_{mi}$ and computes the corresponding reliability weights $a_{mj}^{(\ell)}$ using \Eqref{eq:reliability_weights}. Let $\mA^{(\ell)}=[a_{mj}^{(\ell)}]$. These weights are held fixed during the subsequent parameter updates, yielding a weighted DAGMA subproblem for estimating the graph structure.

Given these weights and the current scale estimates $\boldsymbol{\sigma}^{(\ell)}$, the graph update $\mW^{(\ell+1)}$ is obtained by approximately solving
\begin{align}\label{eq:W_update}
\mW^{(\ell + 1)}=\arg\min_{\mW} \,\mu_k \left[ J(\mW,\mA^{(\ell)}) {+}\lambda\|\mW\|_1\right] {+}h_s(\mW).
\end{align}
The scale-mixture representation of the Student-$t$ likelihood transforms the standard least-squares DAGMA step into an iteratively reweighted DAGMA problem. The weighted loss provides robust structural fitting, the $\ell_1$-regularizer promotes sparsity, and $h_s(\mW)$ penalizes cyclic structure. \textcolor{black}{In the implementation, the $\mW$-subproblem is solved using the proximal gradient method with backtracking line search, and updates that do not increase the penalized EM surrogate are only accepted.}

\noindent\textbf{Scale and DoF update.}
Using the updated graph matrix $\mW^{(\ell+1)}$, we recompute the residuals as
$e_{mj}^{(\ell+1)}
=x_{mj}-\sum_{i\neq j}W_{ij}^{(\ell+1)}x_{mi}$.
Minimizing the EM surrogate with respect to the squared scale parameter gives
\begin{equation}\label{eq:sigma_update}
\left(\sigma_j^{(\ell+1)}\right)^2
=\frac{1}{N}\sum_{m=1}^{N}
a_{mj}^{(\ell)}
\left(e_{mj}^{(\ell+1)}\right)^2.
\end{equation}
Here, $a_{mj}^{(\ell)}$ is the posterior mean of the latent precision, computed in the E-step. Thus, the squared scale is updated using reliability-weighted squared residuals from the updated graph. The algorithm therefore does not require the disturbance scale to be known in advance and adapts it separately for each structural equation.

Next, we estimate the DoF parameter $\nu_j$, which controls the heaviness of the Student-$t$ tails. Define the E-step averages
$\bar a_j^{(\ell)}
=\frac{1}{N}\sum_{m=1}^{N}a_{mj}^{(\ell)},
\qquad
\bar b_j^{(\ell)}
=\frac{1}{N}\sum_{m=1}^{N}b_{mj}^{(\ell)},$
where $b_{mj}^{(\ell)}=\mathbb{E}[\log\tau_{mj}\mid e_{mj}^{(\ell)},\theta^{(\ell)}]$ and $\theta^{(\ell)} =(\mW^{(\ell)},\boldsymbol{\sigma}^{(\ell)}, \boldsymbol{\nu}^{(\ell)})$. The posterior expectation is
\begin{equation}\label{eq:bj_equation}
\begin{aligned}
b_{mj}^{(\ell)}
&=\psi_0(\frac{\nu_j^{(\ell)}+1}{2}) -\log[
\frac{\nu_j^{(\ell)}
+{e_{mj}^{(\ell)}}^2/
{\sigma_j^{(\ell)}}^2}{2}],
\end{aligned}
\end{equation}
where $\psi_0(\cdot)$ denotes the digamma function.
The updated value $\nu_j^{(\ell+1)}$ is obtained by solving the following scalar equation for $\nu_j$,
\begin{equation}\label{eq:nu_update}
\log\left(\frac{\nu_j}{2}\right)+1
-\psi_0\left(\frac{\nu_j}{2}\right)
+\bar b_j^{(\ell)}-\bar a_j^{(\ell)}=0.
\end{equation}

\begin{figure*}[!t]
    \centering
    \includegraphics[width=0.95\linewidth]{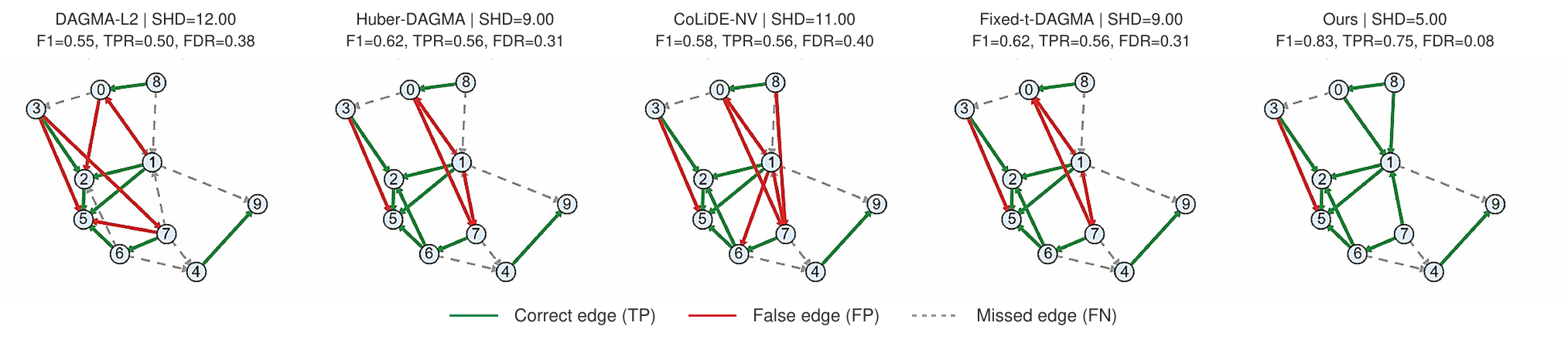}
    \vspace{-2mm}
    \caption{\footnotesize Recovered DAG structures obtained using the competing methods. TP, FP, and FN denote true positive, false positive, and false negative, respectively.}
    %\vspace{-2mm}
    \label{fig:performance_syn_data}
\end{figure*}
The update of $\nu_j$ can be efficiently computed using bisection over $[\nu_{\min},\nu_{\max}]$, where $0<\nu_{\min}<\nu_{\max}$. Only updates that do not increase the EM surrogate are accepted to preserve monotonicity. A smaller $\nu_j$ corresponds to heavier tails and stronger downweighting of large standardized residuals, whereas the reliability weights approach one as the DoF tends to infinity, recovering the Gaussian model. Thus, the proposed method adapts the disturbance scale and robustness separately for each node.

For each fixed $\mu_k$, the inner EM iterations alternately update $\mW$, $\boldsymbol{\sigma}$, and $\boldsymbol{\nu}$ until convergence, with the latent posterior expectations $a_{mj}^{(\ell)}$ and $b_{mj}^{(\ell)}$ computed in the E-step and held fixed during the subsequent parameter updates. The resulting estimates warm-start the next outer iteration, where $\mu_{k+1}=\eta\mu_k$, $0<\eta<1$. Progressively decreasing $\mu_k$ places greater emphasis on acyclicity, driving the solution toward the DAG set. The overall procedure is summarized in Algorithm~\ref{alg:tDAGMA}.

\section{Identifiability}
In this section, we establish the identifiability of the proposed Student-$t$ DAG model by applying standard ICA and LiNGAM identifiability arguments~\cite{comon1994ICA,hyvarinen2013ICA,JMLRLiNGAM}.

The linear SEM can be expressed as $\vx_m=\mF\ve_m$, where $\mF=(\mI-\mW^{\mathsf T})^{-1}$. Since the structural disturbances are mutually independent, this representation corresponds to an ICA model. A fundamental ICA identifiability result~\cite{comon1994ICA,hyvarinen2013ICA} states that, for nondegenerate independent components with at most one Gaussian component, the mixing matrix is identifiable up to column permutation and nonzero scaling. Thus, the equivalent mixing matrix must satisfy $\widetilde{\mF}=\mF\mP\mD$, the matrices $\mP$ and $\mD$ denote a permutation matrix and an invertible diagonal matrix, respectively. In our case, the Student-$t$ disturbances are non-Gaussian for finite $\nu_j>2$. Using this property together with linearity, acyclicity, and absence of latent confounders, gives the following result.

\begin{Proposition}
Consider the linear SEM in~\Eqref{eq:Linear_SEM}, where $\mW$ represents a DAG and the structural disturbances are mutually independent with $e_j\sim t_{\nu_j}(0,\sigma_j)$, $\sigma_j>0$, and $2<\nu_j<\nu_{\max}<\infty$, for $j=1,\ldots,d$. Assume that all variables are observed and that there are no latent confounders. Then the model parameters $(\mW,\boldsymbol{\sigma},\boldsymbol{\nu})$ are identifiable from the observational distribution.
\end{Proposition}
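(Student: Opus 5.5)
The plan is to reduce the statement to ICA identifiability and then recover $(\boldsymbol{\sigma},\boldsymbol{\nu})$ from the marginal law of each recovered disturbance. Suppose two parameter triples $(\mW,\boldsymbol{\sigma},\boldsymbol{\nu})$ and $(\widetilde{\mW},\widetilde{\boldsymbol{\sigma}},\widetilde{\boldsymbol{\nu}})$, both satisfying the assumptions, induce the same distribution of $\vx$. Acyclicity makes $\mI-\mW^{\mathsf T}$ unipotent after a permutation, so $\mF=(\mI-\mW^{\mathsf T})^{-1}$ exists, and likewise $\widetilde{\mF}$. We get $\vx=\mF\ve=\widetilde{\mF}\widetilde{\ve}$ with independent components in each representation.

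\textbf{Step 1 (non-Gaussianity and nondegeneracy).} Each $t_{\nu_j}(0,\sigma_j)$ with $\nu_j<\infty$ is non-Gaussian. For example, its density decays polynomially, or for $\nu_j>4$ its excess kurtosis $6/(\nu_j-4)\neq0$. It is also nondegenerate, and for $\nu_j>2$ it has finite variance. Hence no component is Gaussian, and the ICA result cited above (Comon, Hyv\"arinen) gives $\widetilde{\mF}=\mF\mP\mD$ for a permutation $\mP$ and an invertible diagonal $\mD$. Equivalently, $\widetilde{\mI}-\widetilde{\mW}^{\mathsf T}=\mD^{-1}\mP^{\mathsf T}(\mI-\mW^{\mathsf T})$.

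\textbf{Step 2 (LiNGAM normalization).} This is the step I expect to be the main obstacle, although it is standard. Following the LiNGAM argument, the matrix $\mB=\mI-\mW^{\mathsf T}$ has unit diagonal and can be made strictly lower triangular plus $\mI$ under a permutation of rows and columns given by the causal order. The same holds for $\widetilde{\mB}=\mD^{-1}\mP^{\mathsf T}\mB$. I will argue that $\mP^{\mathsf T}\mB$ must have a nonzero diagonal. A row permutation of a matrix that is permutation-similar to a unit triangular matrix has a zero-free diagonal only if the permutation is the identity. This is because the unique zero-free diagonal of such a matrix is its own diagonal, by the determinant and permutation-expansion argument in the LiNGAM paper. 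Hence $\mP=\mI$. Unit diagonal of $\widetilde{\mB}$ then forces $\mD=\mI$. So $\widetilde{\mW}=\mW$ and $\widetilde{\ve}=\ve$ almost surely.

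\textbf{Step 3 (disturbance parameters).} Since $\ve=(\mI-\mW^{\mathsf T})\vx$ is now a fixed function of $\vx$, the marginal law of each $e_j$ is determined. It remains to show that the location-zero Student-$t$ family is identifiable in $(\sigma,\nu)$. I will use tail behavior: the density satisfies $f(x)\sim C(\nu,\sigma)|x|^{-(\nu+1)}$, so $\nu$ is determined as the tail index. An alternative for $\nu>4$ is the kurtosis $3+6/(\nu-4)$, but the tail index covers all $\nu>2$ uniformly. With $\nu$ known, the variance $\sigma^2\nu/(\nu-2)$, which is finite because $\nu>2$, determines $\sigma>0$. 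This completes the argument, and the bound $\nu_{\max}<\infty$ is used only to guarantee non-Gaussianity.
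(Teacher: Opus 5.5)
Your proposal is correct and follows the paper's proof. Both go through ICA identifiability up to permutation and scaling, then the LiNGAM zero-free-diagonal argument forcing $\mP=\mI$ and then $\mD=\mI$. Both finish by recovering $(\sigma_j,\nu_j)$ from the law of $e_j=((\mI-\mW^{\mathsf T})\vx)_j$. Your tail-index-plus-variance argument just makes explicit the univariate Student-$t$ identifiability that the paper cites without proof.
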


\begin{proof}
Since $\mW$ represents a DAG, the SEM admits the ICA
representation $\vx=\mF\ve$, where the disturbances are mutually independent and non-Gaussian for finite $\nu_j$. By the ICA identifiability result, $\mF$ is identifiable up to column permutation and nonzero scaling. For any observationally equivalent parameterization, we therefore have, $\widetilde{\mF}^{-1}
=\mD^{-1}\mP^{\mathsf T}\mF^{-1}.$
The acyclicity condition ensures that $\mW$ is similar to a strictly triangular matrix, and the absence of self-loops guarantees that $\mF^{-1}=\mI-\mW^{\mathsf T}$ has a unit diagonal. Hence, $\mF^{-1}$ and $\widetilde{\mF}^{-1}$ must have unit diagonal and $\mP^{\mathsf T}\mF^{-1}$ must have no zero diagonal entries. Under the acyclicity assumption~\cite{JMLRLiNGAM}, any row permutation that produces a nonzero diagonal entry would indicate a directed cycle in $\mW$, which leads to a contradiction, hence $\mP=\mI$. Moreover, $\mD=\mI$ resolves the scaling issue. Therefore, $\mW=\mI-\mF^{-\mathsf T}$ is uniquely identifiable. The disturbance distribution is then uniquely determined by $\ve=(\mI-\mW^{\mathsf T})\vx$. By the identifiability of the univariate Student-$t$ family, the node-specific parameters $\sigma_j$ and $\nu_j$ are therefore uniquely identified.
\end{proof}

\section{Numerical Experiments}
In this section, we evaluate the proposed method on synthetic and real-world datasets, assessing DAG recovery on synthetic data and held-out performance on financial data. We compare its performance against DAGMA~\cite{NEURIPS2022_dagma}, Huber-DAGMA~\cite{PENG2026_Huber_DAGMA}, CoLiDE-NV~\cite{saboksayr2024colide}, and Fixed-$t$-DAGMA, a variant of our method with a fixed DoF parameter~$\nu$. For all methods, we set the sparsity parameter $\lambda=0.1$ and employ the DAGMA central path $(\mu_0,\mu_1,\mu_2,\mu_3) =(1,0.1,0.01,0.001)$ with $s=1.5$. We allow at most $15$ EM iterations where applicable and $150$ inner iterations.

\vspace{2pt}
\noindent\textbf{Synthetic Data.} We generate synthetic data from a linear SEM with an underlying Erd\H{o}s--R\'enyi DAG with $d=10$ nodes and an average node degree of $2$ (ER2). We generate $N=500$ observations with independent, heterogeneous Student-$t$ disturbances, where the node-specific parameters are selected from $\nu_j\in\{3,5,10\}$ and $\sigma_j\in \{1,6\}$. For the proposed method, we initialize each $\nu_j$ at $10$ and constrain its estimates to $[2.1,50]$. The Huber threshold and edge weight threshold are set to $1.3$ and $0.44$, respectively.

We evaluate graph recovery using structural Hamming distance (SHD), true positive rate (TPR), false discovery rate (FDR), and F1-score. SHD measures the structural discrepancy between the estimated and true DAGs. TPR is the fraction of true edges recovered, whereas FDR is the fraction of estimated edges that are incorrect. The F1-score is the harmonic mean of precision and recall. Lower SHD and FDR and higher TPR and F1-score indicate better recovery. The results are shown in Fig.~\ref{fig:performance_syn_data}.
\vspace{2pt}

\noindent\textbf{49 Industry Portfolios Dataset.} We evaluate the performance of the proposed method using the Kenneth French 49 Industry Portfolios daily-return dataset~\cite{french_data_2026}, which contains value-weighted returns for $49$ U.S. industry portfolios. We consider the period from January 2000 to December 2019. The data are divided chronologically into training, validation, and test sets containing $60\%$, $20\%$, and $20\%$ of the observations, respectively.

\begin{table}[h]
    \centering
    \small
    \setlength{\tabcolsep}{4pt}
    \resizebox{0.75\columnwidth}{!}{%
    \begin{tabular}{|l|c|c|c|}
        \hline
        Method & Test NLL & Test NRMSE & Edge count \\
        \hline
        DAGMA & $32.84$ & $0.469$ & $14$ \\
        \hline
        Huber-DAGMA & $31.61$ & $\mathbf{0.464}$ & $18$ \\
        \hline
        CoLiDE-NV & $32.17$ & $0.467$ & $23$ \\
        \hline
        Ours & $\mathbf{26.40}$ & $0.465$ & $15$ \\
        \hline
    \end{tabular}%
    }\vspace{-2mm}
    \caption{\footnotesize Test-set performance and
    recovered edge counts on the 49 Industry Portfolios
    dataset. Bold values indicate the lowest NLL and NRMSE.}
    \vspace{-2mm}
    \label{tab:real_data}
\end{table}
We evaluate performance using the normalized root-mean-square error (NRMSE) and NLL on the test set, as well as the final edge count of the recovered DAGs. The results are presented in Table~\ref{tab:real_data}. NRMSE measures reconstruction error, NLL assesses the fitted probabilistic model on held-out observations, and the edge count describes graph sparsity. The proposed method achieves the lowest reported test NLL, with a test NRMSE of $0.465$, compared with $0.464$ for Huber-DAGMA. It recovers $15$ edges, fewer than Huber-DAGMA ($18$) and CoLiDE-NV ($23$), but one more than DAGMA ($14$).

\section{Conclusions}
\vspace{-2mm}
We proposed node-varying Student-$t$ DAGMA for recovering DAGs with heterogeneous node-wise disturbances. Using the Gaussian scale-mixture representation of the Student-$t$ distribution, we formulated an IRLS problem and jointly estimated the graph structure, disturbance scales, and DoF parameters. Numerical experiments demonstrated improved DAG recovery under heterogeneous heavy-tailed disturbances in the synthetic setting considered. On real-world financial data, the proposed method achieved the lowest reported test NLL, with reconstruction error comparable to the competing methods, while recovering a sparse graph.

\clearpage
\section{Acknowledgements}
This work is supported by the KIAC Fintech grant. Beyond this support, the authors have no relevant financial or non-financial interests to disclose. Claude AI was used to assist in writing the manuscript and coding the simulations. The authors take full responsibility for the results of this paper.

\section{Compliance with Ethical Standards}
This is a numerical simulation study for which no ethical approval was required.
\bibliographystyle{ieeetr}
\bibliography{strings,refs}

@inproceedings{NEURIPS2018_notears,
 author = {Zheng, Xun and Aragam, Bryon and Ravikumar, Pradeep K and Xing, Eric},
 booktitle = {Advances in Neural Information Processing Systems},
 editor = {S. Bengio and H. Wallach and H. Larochelle and K. Grauman and N. Cesa-Bianchi and R. Garnett},
 pages = {},
 publisher = {Curran Associates, Inc.},
 title = {DAGs with NO TEARS: Continuous Optimization for Structure Learning},
 url = {https://proceedings.neurips.cc/paper_files/paper/2018/file/e347c51419ffb23ca3fd5050202f9c3d-Paper.pdf},
 volume = {31},
 year = {2018}
}

@inproceedings{NEURIPS2020_golem,
 author = {Ng, Ignavier and Ghassami, AmirEmad and Zhang, Kun},
 booktitle = {Advances in Neural Information Processing Systems},
 editor = {H. Larochelle and M. Ranzato and R. Hadsell and M.F. Balcan and H. Lin},
 pages = {17943--17954},
 publisher = {Curran Associates, Inc.},
 title = {On the Role of Sparsity and DAG Constraints for Learning Linear DAGs},
 url = {https://proceedings.neurips.cc/paper_files/paper/2020/file/d04d42cdf14579cd294e5079e0745411-Paper.pdf},
 volume = {33},
 year = {2020}
}

@inproceedings{NEURIPS2022_dagma,
 author = {Bello, Kevin and Aragam, Bryon and Ravikumar, Pradeep},
 booktitle = {Advances in Neural Information Processing Systems},
 doi = {10.52202/068431-0598},
 editor = {S. Koyejo and S. Mohamed and A. Agarwal and D. Belgrave and K. Cho and A. Oh},
 pages = {8226--8239},
 publisher = {Curran Associates, Inc.},
 title = {DAGMA: Learning DAGs via M-matrices and a Log-Determinant Acyclicity Characterization},
 url = {https://proceedings.neurips.cc/paper_files/paper/2022/file/36e2967f87c3362e37cf988781a887ad-Paper-Conference.pdf},
 volume = {35},
 year = {2022}
}

@article{JMLRLiNGAM,
  author  = {Shohei Shimizu and Patrik O. Hoyer and Aapo Hyv  and rinen and Antti Kerminen},
  title   = {A Linear Non-Gaussian Acyclic Model for Causal Discovery},
  journal = {Journal of Machine Learning Research},
  year    = {2006},
  volume  = {7},
  number  = {72},
  pages   = {2003--2030},
  url     = {http://jmlr.org/papers/v7/shimizu06a.html}
}

@inproceedings{Robust_DAG_finegold,
title = "Robust graphical modeling with t-distributions",
author = "Finegold, \{Michael A.\} and Mathias Drton",
year = "2009",
language = "English",
series = "Proceedings of the 25th Conference on Uncertainty in Artificial Intelligence, UAI 2009",
publisher = "AUAI Press",
pages = "169--176",
booktitle = "Proceedings of the 25th Conference on Uncertainty in Artificial Intelligence, UAI 2009",
}

@misc{gnecco2020causaldiscoveryheavytailedmodels,
      title={Causal discovery in heavy-tailed models}, 
      author={Nicola Gnecco and Nicolai Meinshausen and Jonas Peters and Sebastian Engelke},
      year={2020},
      eprint={1908.05097},
      archivePrefix={arXiv},
      primaryClass={stat.ME},
      url={https://arxiv.org/abs/1908.05097}, 
}

@misc{duong2023heteroscedasticcausalstructurelearning,
      title={Heteroscedastic Causal Structure Learning}, 
      author={Bao Duong and Thin Nguyen},
      year={2023},
      eprint={2307.07973},
      archivePrefix={arXiv},
      primaryClass={cs.LG},
      url={https://arxiv.org/abs/2307.07973}, 
}

@inproceedings{Yin_Heteroscedastic_2024,
author = {Yin, Naiyu and Gao, Tian and Yu, Yue and Ji, Qiang},
title = {Effective causal discovery under identifiable heteroscedastic noise model},
year = {2024},
isbn = {978-1-57735-887-9},
publisher = {AAAI Press},
url = {https://doi.org/10.1609/aaai.v38i15.29586},
doi = {10.1609/aaai.v38i15.29586},
booktitle = {Proceedings of the Thirty-Eighth AAAI Conference on Artificial Intelligence and Thirty-Sixth Conference on Innovative Applications of Artificial Intelligence and Fourteenth Symposium on Educational Advances in Artificial Intelligence},
articleno = {1838},
numpages = {9},
series = {AAAI'24/IAAI'24/EAAI'24}
}

@book{pearl2009causality_SEM,
  title={Causality},
  author={Pearl, Judea},
  year={2009},
  publisher={Cambridge university press}
}

@article{Chickering_2004,
author = {Chickering, David Maxwell and Heckerman, David and Meek, Christopher},
title = {Large-Sample Learning of Bayesian Networks is NP-Hard},
year = {2004},
issue_date = {12/1/2004},
publisher = {JMLR.org},
volume = {5},
issn = {1532-4435},
journal = {J. Mach. Learn. Res.},
month = dec,
pages = {1287–1330},
numpages = {44}
}

@incollection{chickering1996learning,
  title={Learning Bayesian networks is NP-complete},
  author={Chickering, David Maxwell},
  booktitle={Learning from data: Artificial intelligence and statistics V},
  pages={121--130},
  year={1996},
  publisher={Springer}
}

@inproceedings{saboksayr2024colide,
  title={{CoLiDE: Concomitant Linear DAG Estimation}},
  author={Saboksayr, Seyed Saman and Mateos, Gonzalo and Tepper, Mariano},
  booktitle={International Conference on Learning Representations},
  year={2024}
}

@article{PENG2026_Huber_DAGMA,
title = {Robust causal representation learning for intelligent vehicle perception via adaptive huber-DAGMA framework},
journal = {Advanced Engineering Informatics},
volume = {74},
pages = {104802},
year = {2026},
issn = {1474-0346},
doi = {https://doi.org/10.1016/j.aei.2026.104802},
url = {https://www.sciencedirect.com/science/article/pii/S1474034626004945},
author = {Ziran Peng and Wang Li}}

@article{comon1994ICA,
  title={Independent component analysis, a new concept?},
  author={Comon, Pierre},
  journal={Signal processing},
  volume={36},
  number={3},
  pages={287--314},
  year={1994},
  publisher={Elsevier}
}

@article{hyvarinen2013ICA,
  title={Independent component analysis: recent advances},
  author={Hyv{\"a}rinen, Aapo},
  journal={Philosophical transactions. Series A, Mathematical, physical, and engineering sciences},
  volume={371},
  number={1984},
  pages={20110534},
  year={2013}
}

@misc{french_data_2026,
  author       = {French, Kenneth R.},
  title        = {Fama/{F}rench 49 Industry Portfolios},
  howpublished = {Kenneth R. French Data Library},
  organization = {Tuck School of Business, Dartmouth College},
  year         = {2026}, 
  note         = {Accessed: 2026-09-22},
  url          = {https://dartmouth.edu}
}
\clearpage

% \section{Appendix}

\end{document}